\newcommand\eqdef{\stackrel{\mathclap{\normalfont\mbox{\normalfont\tiny def}}}{=}}
\newcommand{\ignore}[1]{}
\begin{document}
% \renewcommand\thelinenumber{\color[rgb]{0.2,0.5,0.8}\normalfont\sffamily\scriptsize\arabic{linenumber}\color[rgb]{0,0,0}}
% \renewcommand\makeLineNumber {\hss\thelinenumber\ \hspace{6mm} \rlap{\hskip\textwidth\ \hspace{6.5mm}\thelinenumber}}
% \linenumbers
\pagestyle{headings}
\mainmatter

\title{Variational Wasserstein Clustering} 
% Replace with your title

\titlerunning{Variational Wasserstein Clustering}
% Replace with a meaningful short version of your title

\authorrunning{Liang Mi~\textit{et al.}}
% Replace with shorter version of the author list. If there are more authors than fits a line, please use A. Author et al.

\author{Liang Mi$^1$, Wen Zhang$^1$, Xianfeng Gu$^2$, and Yalin Wang$^1$}

%Please write out author names in full in the paper, i.e. full given and family names. 
%If any authors have names that can be parsed into FirstName LastName in multiple ways, please include the correct parsing, in a comment to the volume editors:
%\index{Lastnames, Firstnames}
%(Do not uncomment it, because you may introduce extra index items if you do that, we will use scripts for introducing index entries...)

\institute{$^1$Arizona State University, Tempe, USA\quad $^2$Stony Brook University, Stony Brook, USA\\ 
\email{ \{liangmi,wzhan139,ylwang\}@asu.edu}\quad\quad \email{gu@cs.stonybrook.edu}\\
% 	$^2$Stony Brook University, Stony Brook, USA\\
% 	\email{ gu@cs.stonybrook.edu}
}

\maketitle

\begin{abstract}

We propose a new clustering method based on optimal transportation. We discuss the connection between optimal transportation and k-means clustering, solve optimal transportation with the variational principle, and investigate the use of power diagrams as transportation plans for aggregating arbitrary domains into a fixed number of clusters. We drive cluster centroids through the target domain while maintaining the minimum clustering energy by adjusting the power diagram. Thus, we simultaneously pursue clustering and the Wasserstein distance between the centroids and the target domain, resulting in a measure-preserving mapping. We demonstrate the use of our method in domain adaptation, remeshing, and learning representations on synthetic and real data. 

\keywords{clustering, discrete distribution, k-means, measure preserving, optimal transportation, Wasserstein distance}
\end{abstract}

\section{Introduction}
\label{sec:intro}

Aggregating distributional data into clusters has ubiquitous applications in computer vision and machine learning. A continuous example is unsupervised image categorization and retrieval where similar images reside close to each other in the image space or the descriptor space and they are clustered together and form a specific category. A discrete example is document or speech analysis where words and sentences that have similar meanings are often grouped together. k-means~\cite{lloyd1982least,forgy1965cluster} is one of the most famous clustering algorithms, which aims to partition empirical observations into $k$ clusters in which each observation has the closest distance to the \textit{mean} of its own cluster. It was originally developed for solving quantization problems in signal processing and in early 2000s researchers have discovered its connection to another classic problem optimal transportation which seeks a transportation plan that minimizes the transportation cost between probability measures~\cite{graf2007foundations}.

The optimal transportation (OT) problem has received great attention since its very birth. Numerous applications such as color transfer and shape retrieval have benefited from solving OT between probability distributions. Furthermore, by regarding the minimum transportation cost -- \textit{the Wasserstein distance} -- as a metric, researchers have been able to compute the barycenter~\cite{agueh2011barycenters} of multiple distributions, e.g.~\cite{cuturi2014fast,solomon2015convolutional}, for various applications. Most researchers regard OT as finding the optimal coupling of the two probabilities and thus each sample can be mapped to multiple places. It is often called Kantorovich's OT. Along with this direction, several works have shown their high performances in clustering distributional data via optimal transportation, .e.g.~\cite{ye2017fast,solomon2015convolutional,ho2017multilevel}. On the other hand, some researchers regard OT as a measure-preserving mapping between distributions and thus a sample cannot be split. It is called Monge-Brenier's OT.

In this paper, we propose a clustering method from Monge-Brenier's approach. Our method is based on Gu \MakeLowercase{\textit{et al.}}~\cite{gu2013variational} who provided a variational solution to Monge-Brenier OT problem. We call it \textit{variational optimal transportation} and name our method \textit{variational Wasserstein clustering}. We leverage the connection between the \textit{Wasserstein distance} and the clustering error function, and simultaneously pursue the Wasserstein distance and the k-means clustering by using a power Voronoi diagram. Given the empirical observations of a target probability distribution, we start from a sparse discrete measure as the initial condition of the centroids and alternatively update the partition and update the centroids while maintaining an optimal transportation plan. From a computational point of view, our method is solving a special case of the \textit{Wasserstein barycenter} problem~\cite{agueh2011barycenters,cuturi2014fast} when the target is a univariate measure. Such a problem is also called the \textit{Wasserstein means} problem~\cite{ho2017multilevel}. We demonstrate the applications of our method to three different tasks -- domain adaptation, remeshing, and representation learning. In domain adaptation on synthetic data, we achieve competitive results with D2~\cite{ye2017fast} and JDOT~\cite{courty2017joint}, two methods from Kantorovich's OT. The advantages of our approach over those based on Kantorovich's formulation are that (1) it is a local diffeomorphism; (2) it does not require pre-calculated pairwise distances; and (3) it avoids searching in the product space and thus dramatically reduces the number of parameters.

The rest of the paper is organized as follows. In Section~\ref{sec:relate} and~\ref{sec:pre}, we provide the related work and preliminaries on optimal transportation and k-means clustering. In Section~\ref{sec:vot}, we present the variational principle for solving optimal transportation. In Section~\ref{sec:vwc}, we introduce our formulation of the k-means clustering problem under variational Wasserstein distances. In Section~\ref{sec:apply}, we show the experiments and results from our method on different tasks. Finally, we conclude our work in Section~\ref{sec:discuss} with future directions.

\section{Related Work}
\label{sec:relate}

\subsection{Optimal Transportation}
\label{sec:relate_ot}
The optimal transportation (OT) problem was originally raised by Monge~\cite{monge1781memoire} in the 18th century, which sought a transportation plan for matching distributional data with the minimum cost. In 1941, Kantorovich~\cite{kantorovich1942translocation} introduced a relaxed version and proved its existence and uniqueness. Kantorovich also provided an optimization approach based on linear programming, which has become the dominant direction. Traditional ways of solving the Kantorovich's OT problem rely on pre-defined pairwise transportation costs between measure points, e.g.~\cite{cuturi2013sinkhorn}, while recently researchers have developed fast approximations that incorporate computing the costs within their frameworks, e.g.~\cite{solomon2015convolutional}. 

Meanwhile, another line of research followed Monge's OT and had a breakthrough in 1987 when Brenier~\cite{brenier1991polar} discovered the intrinsic connection between optimal transportation and convex geometry. Following Brenier's theory, M\'erigot \cite{merigot2011multiscale}, Gu \textit{et al.}~\cite{gu2013variational}, and L\'evy~\cite{levy2015numerical} developed their solutions to Monge's OT problem. M\'erigot and L\'evy's OT formulations are non-convex and they leverage damped Newton and quasi-Newton respectively to solve them. Gu \textit{et al.} proposed a convex formulation of OT particularly for convex domains where pure Newton's method works and then provided a variational method to solve it.

\subsection{Wasserstein Metrics}
\label{sec:relate_wm}
The Wasserstein distance is the minimum cost induced by the optimal transportation plan. It satisfies all metric axioms and thus is often borrowed for measuring the similarity between probability distributions. The transportation cost generally comes from the product of the geodesic distance between two sample points and their measures. We refer to $p$--Wasserstein distances to specify the exponent $p$ when calculating the geodesic~\cite{givens1984class}. The $1$--Wasserstein distance or earth mover's distance (EMD) has received great attention in image and shape comparison~\cite{rubner2000earth,ling2007efficient}. Along with the rising of deep learning in numerous areas, $1$--Wasserstein distances have been adopted in many ways for designing loss functions for its superiority over other measures~\cite{lee2018wasserstein,arjovsky2017wasserstein,frogner2015learning,gibbs2002choosing}. The $2$--Wasserstein distance, although requiring more computation, are also popular in image and geometry processing thanks to its geometric properties such as barycenters~\cite{agueh2011barycenters,solomon2015convolutional}. In this paper, we focus on $2$--Wasserstein distances.

\subsection{K-means Clustering}
\label{sec:relate_km}
The K-means clustering method goes back to Lloyd~\cite{lloyd1982least} and Forgy~\cite{forgy1965cluster}. Its connections to the $1,2$-Wasserstein metrics were leveraged in~\cite{ho2017multilevel} and~\cite{applegate2011unsupervised}, respectively. The essential idea is to use a sparse discrete point set to cluster denser or continuous distributional data with respect to the Wasserstein distance between the original data and the sparse representation, which is equivalent to finding a Wasserstein barycenter of a single distribution~\cite{cuturi2014fast}. A few other works have also contributed to this problem by proposing fast optimization methods, e.g.~\cite{ye2017fast}.

In this paper, we approach the k-means problem from the perspective of optimal transportation in the variational principle. Because we leverage power Voronoi diagrams to compute optimal transportation, we simultaneously pursue the Wasserstein distance and k-means clustering. We compare our method with others through empirical experiments and demonstrate its applications in different fields of computer vision and machine learning research.

%%%%%%%%%%%%%%%%%%%%%%%%%%%%%%%%%%%%%%%%%%%%%%%%%%%%%
%%%%%%%%%%%%%%%%%%% Preliminaries %%%%%%%%%%%%%%%%%%%
%%%%%%%%%%%%%%%%%%%%%%%%%%%%%%%%%%%%%%%%%%%%%%%%%%%%%

\section{Preliminaries}
\label{sec:pre}
We first introduce the optimal transportation (OT) problem and then show its connection to k-means clustering. We use $X$ and $Y$ to represent two Borel probability measures and $M$ their compact embedding space.

\subsection{Optimal Transportation}
\label{sec:pre_ot}
Suppose $\mathcal{P}(M)$ is the space of all Borel probability measures on $M$. Without losing generality, suppose $X(x,\mu)$ and $Y(y,\nu)$ are two such measures, i.e. $X$, $Y\in \mathcal{P}(M)$. Then, we have $1 = \int_{M}^{} \mu(x) dx = \int_{M}^{} \nu(y) dy$, with the supports $\Omega_{X} = \{x\} =\{m \in M \mid \mu(m) > 0\}$ and $\Omega_{Y} = \{y\} =\{m \in M \mid \nu(m) > 0\}$. We call a mapping $T:X(x,\mu) \rightarrow Y(y,\nu)$ a measure-preserving one if the measure of any subset $B$ of $Y$ is equal to the measure of the origin of $B$ in $X$, which means $\mu(T^{-1}(B)) = \nu(B),\  \forall B \subset Y$. 

We can regard $T$ as the \textit{coupling} $\pi(x,y)$ of the two measures, each being a corresponding \textit{marginal} $\mu=\pi(\cdot,y)$, $\nu=\pi(x,\cdot)$. Then, all the couplings are the probability measures in the product space, $\pi \in \prod(M \times M)$. Given a transportation cost $c:M \times M \rightarrow \mathbb{R}^{+}$ --- usually the geodesic distance to the power of $p$, $c(x,y)={d(x,y)}^p$ --- the problem of optimal transportation is to find the mapping $\pi_{opt}:x \rightarrow y$ that minimizes the total cost,
\begin{equation}
  W_p(\mu,\nu) \eqdef \bigg(\underset{\pi \in \prod(\mu,\nu)}{\inf}  \int_{M\times M}^{} c(x,y)d\pi(x,y)\bigg)^{1/p},
  \label{eq:cost_continuous}
\end{equation}
\noindent
where $p$ indicates the power. We call the minimum total cost the $p$ --\textit{Wasserstein distance}. Since we address Monge's OT in which mass cannot be split, we have the restriction that $d\pi(x,y) = d\pi_T(x,y) \equiv d\mu(x)\delta[y=T(x)]$, inferring that

\begin{equation}
  {\pi_T}_{opt} = T_{opt} = \underset{T}{\arg\min}  \int_{M}^{} c(x,T(x))d\mu(x).
  \label{eq:ot_monge}
\end{equation}

\noindent 
In this paper, we follow Eq. (\ref{eq:ot_monge}). The details of the optimal transportation problem and the properties of the Wasserstein distance can be found in~\cite{villani2003topics,gibbs2002choosing}. For simplicity, we use $\pi$ to denote the optimal transportation map.

\subsection{K-means Clustering}
\label{sec:pre_kmeans}
Given the empirical observations $\{(x_i,\mu_i)\}$ of a probability distribution $X(x,\mu)$, the k-means clustering problem seeks to assign a cluster centroid (or prototype) $y_j = y(x_i)$ with label $j=1,...,k$ to each empirical sample $x_i$ in such a way that the error function (\ref{eq:kmeans}) reaches its minimum and meanwhile the measure of each cluster is preserved, i.e. $\nu_j = \sum_{y_j=y(x_i)} \mu_i$. It is equivalent to finding a partition $V=\{(V_j,y_j)\}$ of the embedding space $M$. If $M$ is convex, then so is $V_j$.

\begin{equation}
\label{eq:kmeans}
	\underset{y}{\arg\min}\sum_{x_i}\mu_i {d(x_i,y(x_i))}^p\ \equiv\ \underset{V}{\arg\min}\sum_{j=1}^{K}\sum_{x_i \in V_j}\mu_i {d(x_i,y(V_j))}^p.
\end{equation}

Such a clustering problem (\ref{eq:kmeans}), when $\nu$ is fixed, is equivalent to Monge's OT problem (\ref{eq:ot_monge}) when the support of $y$ is sparse and not fixed because $\pi$ and $V$ induce each other, i.e. $\pi \Leftrightarrow V$. Therefore, the solution to Eq. (\ref{eq:kmeans}) comes from the optimization in the search space $\mathcal{P}(\pi,y)$. Note that when $\nu$ is not fixed such a problem becomes \textit{the Wasserstein barycenter} problem as finding a minimum in $\mathcal{P}(\pi,y,\nu)$, studied in~\cite{agueh2011barycenters,cuturi2014fast,ye2017fast}.

%%%%%%%%%%%%%%%%%%%%%%%%%%%%%%%%%%%%%%%%%%%%%%%%%%%%%
%%%%%%%% Variational Optimal Transportation %%%%%%%%%
%%%%%%%%%%%%%%%%%%%%%%%%%%%%%%%%%%%%%%%%%%%%%%%%%%%%%

\section{Variational Optimal Transportation}
\label{sec:vot}
We present the variational principle for solving the optimal transportation problem. Given a metric space $M$, a Borel probability measure $X(x,\mu)$, and its compact support $\Omega = \textit{supp}\ \mu = \{x \in M \mid \mu(x) > 0\}$, we consider a sparsely supported point set with Dirac measure $Y(y,\nu)=\{(y_j,\nu_j>0)\}$, $j=1,...,k$. (Strictly speaking, the empirical measure ${X(x,\mu)}$ is also a set of Dirac measures but in this paper we refer to $X$ as the empirical measure and $Y$ as the Dirac measure for clarity.) Our goal is to find an optimal transportation plan or map (OT-map), $\pi:x\rightarrow y$, with the \textit{push-forward} measure $\pi_\#\mu=\nu$. This is \textit{semi-discrete OT}.
\begin{figure}[!t]
  \centering
    \includegraphics[width=\textwidth]{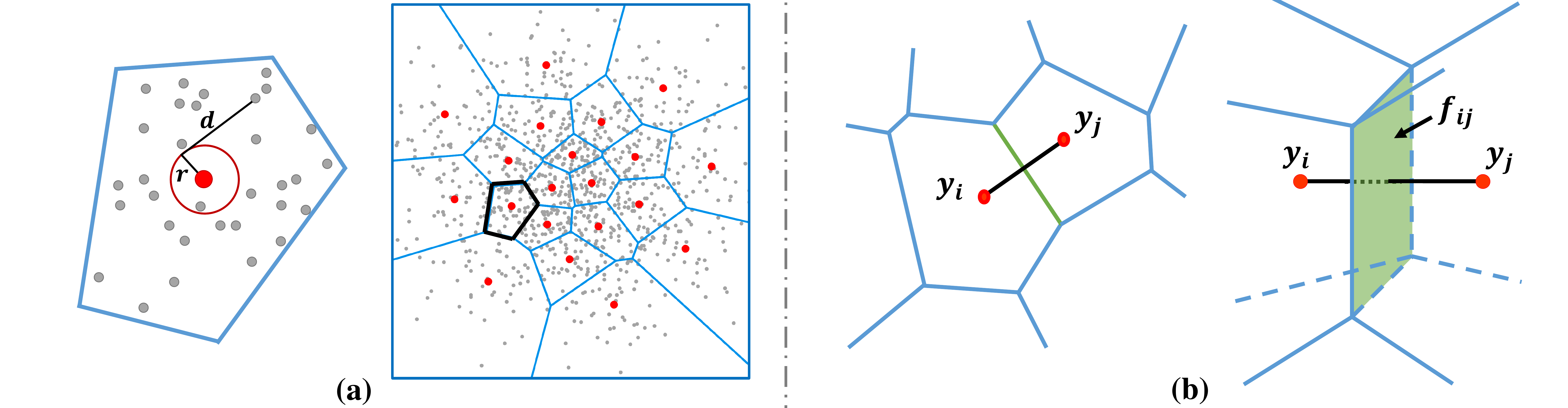}
    \caption{{\small (a) Power Voronoi diagram. Red dots are centroids of the Voronoi cells, or clusters. The Power distances has an offset depending on the weight of the cell. (b) Intersection of adjacent cells in 2D and 3D for computing Hessian. }}
\label{fig:hessian}
\end{figure}

We introduce a vector $\bm{h} = (h_1, ..., h_k)^{T}$, a hyperplane on $M$, $\gamma_j(\bm{h}):\langle m,y_j \rangle + h_j = 0$, and a piecewise linear function:
\begin{equation}
	\theta_{\bm{h}}(x) = \max\{\langle x,y_j\rangle+h_j\},\ j = 1,...,k.\nonumber
	\label{eq:linearConvex}
\end{equation}

\begin{theorem}
\label{theory:alexandrov}
(Alexandrov~\cite{alexandrov2005convex}) Suppose $\Omega$ is a compact convex polytope with non-empty interior in $\mathbb{R}^n$ and $y_1,...,y_k \subset \mathbb{R}^n$ are $k$ distinct points and $\nu_1,...,\nu_k > 0$ so that $\sum_{j=1}^{k}\nu_j = \text{vol}(\Omega)$. There exists a unique vector $\bm{h} = (h_1,...,h_k)^T \in \mathbb{R}^k$ up to a translation factor $(c,...,c)^T$ such that the piecewise linear convex function $\theta_{\bm{h}}(x)= \ignore{\underset{x\in\Omega}}{\max}\{\langle x,y_j\rangle+h_j\}$ satisfies $\text{vol}(x \in \Omega \mid \nabla \theta_{\bm{h}}(x)=y_j) = \nu_j$. 
\end{theorem}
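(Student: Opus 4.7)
The plan is to prove this via a variational argument: construct an energy functional $E(\mathbf{h})$ on $\mathbb{R}^k$ whose gradient is $(w_j(\mathbf{h}) - \nu_j)_{j=1}^k$, where $w_j(\mathbf{h}) := \text{vol}(W_j(\mathbf{h}))$ and $W_j(\mathbf{h}) = \{x \in \Omega : \langle x, y_j \rangle + h_j \geq \langle x, y_i \rangle + h_i \text{ for all } i\}$ is the cell on which $\theta_{\mathbf{h}}$ agrees with its $j$-th affine piece. On the interior of $W_j$ one has $\nabla \theta_{\mathbf{h}} \equiv y_j$, so the theorem amounts to showing the map $\mathbf{h} \mapsto (w_1, \ldots, w_k)(\mathbf{h})$ hits the prescribed vector $(\nu_1, \ldots, \nu_k)$ at a unique point modulo the translation direction $(1,\ldots,1)^T$.

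Next I would compute the Jacobian of $w$. For $i \neq j$, a coarea calculation on the separating hyperplane between $W_i$ and $W_j$ (illustrated in Fig.~\ref{fig:hessian}(b)) yields $\partial w_i / \partial h_j = -\text{vol}_{n-1}(W_i \cap W_j)/\|y_i - y_j\|$, while the partition identity $\sum_j w_j(\mathbf{h}) \equiv \text{vol}(\Omega)$ forces $\partial w_i/\partial h_i = -\sum_{j \neq i}\partial w_i/\partial h_j$. This matrix is symmetric, so by the Poincar\'e lemma the vector field $(w_j - \nu_j)_j$ is the gradient of a path-independent function
\begin{equation*}
E(\mathbf{h}) = \int_{\mathbf{0}}^{\mathbf{h}} \sum_{j=1}^{k} (w_j(\eta) - \nu_j)\, d\eta_j,
\end{equation*}
whose critical points are exactly the desired vectors.

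For uniqueness I would invoke the Laplacian structure of the Jacobian — non-negative diagonal, non-positive off-diagonal, zero row sums — which makes it positive semidefinite with $(1,\ldots,1)^T$ lying in its kernel. A connectedness argument on the adjacency graph of the non-degenerate cells, available because $\Omega$ is convex with non-empty interior and the $y_j$ are distinct, shows the kernel is exactly this one-dimensional line, so $E$ is strictly convex on any hyperplane slice transverse to $\mathbb{R}(1,\ldots,1)^T$. This gives uniqueness up to the stated additive ambiguity.

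For existence I would fix the gauge by imposing $\sum_j h_j = 0$ and prove that $E$ is coercive on this $(k-1)$-dimensional slice. The main obstacle is the boundary analysis: I expect that when $\mathbf{h}$ tends to infinity along any outward direction on the slice, at least one coordinate $h_j$ must go to $-\infty$ relative to the others, forcing $w_j(\mathbf{h}) \to 0$ so that $\partial E/\partial h_j = w_j - \nu_j$ stays pinned near $-\nu_j < 0$; integrating along the ray then shows $E$ grows linearly in that direction. Combined with strict convexity this produces a unique minimizer on the slice, and the hypothesis $\sum_j \nu_j = \text{vol}(\Omega)$ is exactly the compatibility condition that makes $w_j(\mathbf{h}) = \nu_j$ consistent with the identity $\sum_j w_j \equiv \text{vol}(\Omega)$, closing the argument.
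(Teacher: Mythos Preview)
The paper does not actually prove Theorem~\ref{theory:alexandrov}; it is stated as a cited result from Alexandrov and then used as a black box. What the paper \emph{does} develop immediately afterward --- the energy $E(\bm{h})$ in Eq.~(\ref{eq:energy}), its gradient $\nabla E = (w_j - \nu_j)_j$ in Eq.~(\ref{eq:gradientEnergy}), and the Hessian in Eq.~(\ref{eq:hessian}) together with the remark that $H$ is positive semi-definite with kernel $\mathbb{R}(1,\dots,1)^T$ --- is precisely the variational machinery you have reconstructed, but the paper deploys it for \emph{computing} $\bm{h}$ via Newton's method rather than as a proof of existence and uniqueness. So your proposal is not so much a different route as it is a completion: you are supplying the argument (strict convexity on a transverse slice plus coercivity) that turns the paper's computational framework into an honest proof of the theorem it cites.

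One point to tighten: the strict convexity you claim from the graph-Laplacian structure of $H$ only holds on the open region $\mathcal{H}_0 \subset \mathbb{R}^k$ where every cell $W_j(\bm{h})$ has positive volume; outside it the adjacency graph can disconnect and the kernel of $H$ can grow. Your coercivity sketch (that $w_j \to 0$ pins $\partial E/\partial h_j$ near $-\nu_j < 0$) is the right mechanism for keeping the minimizer inside $\mathcal{H}_0$, but you should make explicit that you are minimizing over $\mathcal{H}_0 \cap \{\sum h_j = 0\}$ and that $E$ blows up as you approach its boundary, not just at infinity. With that clarified, the argument is sound and is exactly the variational proof of Gu \textit{et al.}~\cite{gu2013variational} that underlies the paper's Algorithm~\ref{alg:ot}.
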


Furthermore, Brenier~\cite{brenier1991polar} proved that the gradient map $\nabla\theta$ provides the solution to Monge's OT problem, that is, $\nabla \theta_{\bm{h}}$ minimizes the transportation cost $\int_{\Omega}\|x-\theta_{\bm{h}}(x)\|^2$. Therefore, given $X$ and $Y$, $\bm{h}$ by itself induces OT. 

From~\cite{aurenhammer1987power}, we know that a convex subdivision associated to a piecewise-linear convex function $u_h(x)$ on $\mathbb{R}^n$ equals a \textit{power Voronoi diagram}, or \textit{power diagram}. A typical power diagram on $M \subset \mathbb{R}^n$ can be represented as:
\begin{equation}
  V_j \eqdef \{ m\in M \mid \|m - y_j\|^2 - r_j^2 \leqslant \|m - y_i\|^2 - r_i^2  \},\  \forall j \neq i. \nonumber
\end{equation}
\noindent Then, a simple calculation gives us 
\begin{equation}
m \cdot y_j - \frac{1}{2}(y_j \cdot y_j + r_j^2) \leqslant m \cdot y_i - \frac{1}{2}(y_i \cdot y_i + r_i^2), \nonumber
\end{equation}
\noindent where $m \cdot y_j = \langle m,y_j \rangle$ and $w_j$ represents the offset of the \textit{power distance} as shown in Fig.~\ref{fig:hessian} (a). On the other hand, the graph of the hyperplane $\pi_j(\bm{h})$ is 
\begin{equation}
  U_i \eqdef \{ m\in M \mid \langle m,y_j \rangle -h_j \geqslant \langle m,y_i \rangle -h_i \},\  \forall j \neq i. \nonumber
  \label{eq:hyperplaneGraph}
\end{equation}
\noindent Thus, we obtain the numerical representation $h_j = -\frac{|y_j|^2-r_j^2}{2}$.

We substitute $M(m)$ with the measure $X(x)$. In our formulation, Brenier's gradient map $\nabla \theta_{\bm{h}}:V_j(\bm{h})\rightarrow y_j$ ``transports'' each $V_j(\bm{h})$ to a specific point $y_j$. The total mass of $V_j(\bm{h})$ is denoted as: $w_j(\bm{h}) = \sum_{x \in V_j(\bm{h})} \mu(x)$.

\begin{algorithm}[!t]
\SetKwFunction{FRecurs}{Variational-OT}
\Fn{\FRecurs{$X(x,\mu)$, $Y(y,v)$, $\epsilon$}}{
\DontPrintSemicolon
	$\bm{h} \leftarrow \bm{0}$.\;
	\Repeat{$|\nabla E(\bm{h})| < \epsilon$.}{
		Update power diagram $V$ with $(y,\bm{h})$.\;
		Compute cell weight $w(\bm{h}) = \{\sum_{m \in V_j} \mu(m)\}$.\;
        Compute gradient $\nabla E(\bm{h})$ and Hessian $H$ using Equation (\ref{eq:gradientEnergy}) and (\ref{eq:hessian}).\;
        $\bm{h} \leftarrow \bm{h} - \lambda H^{-1} \nabla E(\bm{h})$. // Update the minimizer $\bm{h}$ according to (\ref{eq:gradientSolver})\;
    }
	\KwRet{$V,\bm{h}$}.\;
%}
}
\caption{Variational optimal transportation}
\label{alg:ot}
\end{algorithm}

Now, we introduce an energy function

\begin{equation}
\begin{split}
	E(\bm{h}) \eqdef & \int_{\Omega}^{}\theta_{\bm{h}}(x)\mu(x)dx -\sum_{j=1}^{k}\nu_i h_j\\
	\equiv & \int_{}^{\bm{h}}\sum_{j=1}^{k}w_j(\xi)d\xi -\sum_{j=1}^{k}\nu_i h_j.
	\label{eq:energy}
\end{split}
\end{equation}
\noindent $E$ is differentiable w.r.t. $\bm{h}$~\cite{gu2013variational}. Its gradient and Hessian are then given by 

\begin{equation}
	\nabla E(\bm{h}) = (w_1(\bm{h})-\nu_1, ..., w_k(\bm{h})-\nu_k)^T,
	\label{eq:gradientEnergy}
\end{equation}

\begin{equation}
	H = \frac{\partial^2E(\bm{h})}{\partial h_i\partial h_j} =
	\left\{
	\begin{array}{ll}
		 %\frac{Area(f_{ij})}{\abs{p_j-p_i}}.,&  if\ \ \bm{w}_{j}^{i}\  < \ w_{threshold}  \\
		 \sum_{l} \cfrac{\int_{f_{il}}\mu(x)dx}{\|y_l-y_i\|}, &\quad i = j,\ \forall l, s.t.\  f_{il} \neq \emptyset, \\[3ex]
		 -\cfrac{\int_{f_{ij}}\mu(x)dx}{\|y_j-y_i\|}, &\quad i \neq j,\ f_{ij} \neq \emptyset, \\[3ex]
		 0, &\quad i \neq j,\ f_{ij} = \emptyset,
	\end{array}
	\right.
	\label{eq:hessian}
\end{equation}
where $\|\cdot\|$ is the $L1$--norm and $\int_{f_{ij}}\mu(x)dx = \text{vol}(f_{ij})$ is the volume of the intersection $f_{ij}$ between two adjacent cells. Fig.~\ref{fig:hessian} (b) illustrates the geometric relation. The Hessian $H$ is positive semi-definite with only constant functions spanned by a vector $(1,...,1)^T$ in its null space. Thus, $E$ is strictly convex in $\bm{h}$. By Newton's method, we solve a linear system,

\begin{equation}
	H\delta\bm{h} = \nabla E(\bm{h}),
	\label{eq:gradientSolver}
\end{equation}
\noindent
and update $\bm{h}^{(t+1)} \leftarrow \bm{h}^{(t)} + \delta\bm{h}^{(t)}$. \ignore{The superscript $^{(t)}$ indicates iteration.} The energy $E$ (4) is motivated by Theorem~\ref{theory:alexandrov} which seeks a solution to $vol(x \in \Omega \mid \nabla \theta_{\bm{h}}(x)=y_j) = \nu_j$. Move the right-hand side to left and take the integral over $\bm{h}$ then it becomes $E$ (4). Thus, minimizing (4) when the gradient approaches $\bm{0}$ gives the solution. We show the complete algorithm for obtaining the OT-Map $\pi:X\rightarrow Y$ in Alg.~\ref{alg:ot}.

\begin{algorithm}[!t]
\SetKwFunction{FRecurs}{Iterative-Measure-Preserving-Mapping}
\Fn{\FRecurs{$X(x,\mu), Y(y,\nu)$}}{
\DontPrintSemicolon
	\Repeat{$y$ {\normalfont converges.}}{
	$V(\bm{h}) \leftarrow$ Variational-OT($x,\mu,y,\nu$). 	// 1. Update Voronoi partition\;
	$y_j \leftarrow \sum_{x\in V_j} \mu_i x_i  \big/ \sum_{x\in V_j} \mu_i$. 	// 2. Update $y$\;
    }
	\KwRet{$y,V$}.\;
}
\caption{Iterative measure-preserving mapping}
\label{alg:measurePreserving}
\end{algorithm}

%%%%%%%%%%%%%%%%%%%%%%%%%%%%%%%%%%%%%%%%%%%%%%%%%%%%%
%%%%%%%% Variational Wasserstein Clustering %%%%%%%%%
%%%%%%%%%%%%%%%%%%%%%%%%%%%%%%%%%%%%%%%%%%%%%%%%%%%%%
\section{Variational Wasserstein Clustering}
\label{sec:vwc}
We now introduce in detail our method to solve clustering problems through variational optimal transportation. We name it \textit{variational Wasserstein clustering} (VWC). We focus on the semi-discrete clustering problem which is to find a set of discrete sparse centroids to best represent a continuous probability measure, or its discrete empirical representation. Suppose $M$ is a metric space and we embody in it an empirical measure $X(x,\mu)$. Our goal is to find such a sparse measure $Y(y,\nu)$ that minimizes Eq. (\ref{eq:kmeans}).

We begin with an assumption that the distributional data are embedded in the same Euclidean space $M = \mathbb{R}^n$, i.e. $X,Y \in \mathcal{P}(M)$. We observe that if $\nu$ is fixed then Eq. (\ref{eq:ot_monge}) and Eq. (\ref{eq:kmeans}) are mathematically equivalent. Thus, the computational approaches to these problems could also coincide. Because the space is convex, each cluster is eventually a Voronoi cell and the resulting partition $V=\{(V_j,y_j)\}$ is actually a power Voronoi diagram where we have $\|x-y_j\|^2-r_j^2 \leq \|x-y_i\|^2-r_i^2$, $x \in V_j$, $\forall j \neq i$ and $r$ is associated with the total mass of each cell. Such a diagram is also the solution to Monge's OT problem between $X$ and $Y$. From the previous section, we know that if $X$ and $Y$ are fixed the power diagram is entirely determined by the minimizer $\bm{h}$. Thus, assuming $\nu$ is fixed and $y$ is allowed to move freely in $M$, we reformulate Eq. (\ref{eq:kmeans}) to

\begin{equation}
\label{eq:impm}
	f(\bm{h},y) = \sum_{j=1}^{K}\sum_{x_i\in V_j(\bm{h})}\mu_i\|x_i-y_j\|^2,
\end{equation}
\noindent where every $V_j$ is a power Voronoi cell.

The solution to Eq. (\ref{eq:impm}) can be achieved by iteratively updating $\bm{h}$ and $y$. While we can use Alg.~\ref{alg:ot} to compute $\bm{h}$, updating $y$ can follow the rule:

\begin{equation}
	y_j^{(t+1)} \leftarrow \sum \mu_i x_i^{(t)}  \bigg/ \sum \mu_i,\ {x_i^{(t)}\in V_j}.
\end{equation}

\begin{algorithm}[!t]
\DontPrintSemicolon
\SetKwInOut{Input}{Input}\SetKwInOut{Output}{Output}
\Input{Empirical measures $X_M(x,\mu)$ and $Y_N(y,\nu)$}
\Output{Measure-preserving Map $\pi:X \rightarrow Y$ represented as ($y,V$).}
\Begin{
	$\nu \leftarrow$ Sampling-known-distribution. // Initialization.\;
	Harmonic-mapping: $M,N \rightarrow \mathbb{R}^n$ or $\mathbb{D}^n$. // Unify domains.\; 
	$y,V \leftarrow$ Iterative-Measure-Preserving-Mapping($x,\mu,y,\nu$).\;
}
\KwRet{$y,V$}.\;
\caption{Variational Wasserstein clustering}
\label{alg:wassersteinClustering}
\end{algorithm}

\noindent Since the first step preserves the measure and the second step updates the measure, we call such a mapping an \textit{iterative measure-preserving mapping}. Our algorithm repeatedly updates the partition of the space by variational-OT and computes the new centroids until convergence, as shown in Alg.~\ref{alg:measurePreserving}. Furthermore, because each step reduces the total cost (\ref{eq:impm}), we have the following propositions.

\begin{proposition}
\label{ps:minimize}
Alg.~\ref{alg:measurePreserving} monotonically minimizes the object function (\ref{eq:impm}).	
\end{proposition}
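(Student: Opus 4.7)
The plan is to view Algorithm 2 as a block coordinate descent on the two-argument objective $f(\bm{h},y)$, alternately minimizing in $\bm{h}$ with $y$ held fixed and in $y$ with $\bm{h}$ (hence the partition $V(\bm{h})$) held fixed. Since the target marginal $\nu$ is held fixed throughout the whole algorithm (only the locations $y_j$ move, not their masses $\nu_j$), the two updates act on disjoint blocks of variables, and it suffices to check that each individual step weakly decreases $f$.

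For the first step, fix $y$ and let $\bm{h}_{\text{new}}$ be the output of Variational-OT. By Theorem~\ref{theory:alexandrov} together with Brenier's result invoked in Section~\ref{sec:vot}, the gradient map $\nabla\theta_{\bm{h}_{\text{new}}}$ is the unique optimal transport map from $(x,\mu)$ to $(y,\nu)$, and it is precisely the assignment $x \in V_j(\bm{h}_{\text{new}}) \mapsto y_j$. Consequently its transportation cost equals
\begin{equation*}
\sum_{j=1}^{K}\sum_{x_i \in V_j(\bm{h}_{\text{new}})} \mu_i \|x_i - y_j\|^2 \;=\; f(\bm{h}_{\text{new}},y),
\end{equation*}
and any other power diagram that pushes $\mu$ forward to $\nu$ (in particular the previous one $V(\bm{h}_{\text{old}})$) yields a cost at least as large. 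Hence $f(\bm{h}_{\text{new}},y) \le f(\bm{h}_{\text{old}},y)$.

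For the second step, fix $\bm{h}$ and hence the cells $V_j(\bm{h})$. The objective decouples as a sum over $j$ of convex quadratics in $y_j$, namely $y_j \mapsto \sum_{x_i \in V_j(\bm{h})} \mu_i \|x_i - y_j\|^2$, each uniquely minimized by the weighted barycenter $y_j^\ast = \sum_{x_i \in V_j(\bm{h})} \mu_i x_i \big/ \sum_{x_i \in V_j(\bm{h})} \mu_i$, which is exactly the update rule. Thus $f(\bm{h}, y^\ast) \le f(\bm{h}, y)$. Concatenating the two inequalities along each iteration gives a non-increasing sequence of objective values, proving the proposition.

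The only delicate point is step one: one must make sure that Variational-OT is genuinely minimizing $f$ in $\bm{h}$ (not some auxiliary energy like $E(\bm{h})$ in (\ref{eq:energy})). This is resolved by appealing to Brenier's theorem as stated in the excerpt, which identifies the variational minimizer with the $L^2$ optimal transport map and therefore equates its cost with $f(\bm{h},y)$; once this identification is granted, the monotone descent argument is routine.
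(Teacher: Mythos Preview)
Your proof is correct and follows the same two-step alternating descent as the paper: the paper also chains $f(\bm{h}^{(t+1)},y^{(t)}) \le f(\bm{h}^{(t)},y^{(t)})$ (justified there only by ``the convexity of our OT formulation'') with $f(\bm{h}^{(t+1)},y^{(t+1)}) \le f(\bm{h}^{(t+1)},y^{(t)})$ (justified as minimizing the mean squared error). Your explicit appeal to Brenier's theorem to connect the minimizer of the auxiliary energy $E(\bm{h})$ with the minimizer of the transport cost $f(\cdot,y)$ is more careful than the paper's one-line justification, but the overall structure is identical.
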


\begin{proof}
	It is sufficient for us to show that for any $t \geq 0$, we have
	\begin{equation}
		f(\bm{h}^{(t+1)},y^{(t+1)}) \leq f(\bm{h}^{(t)},y^{(t)}).
	\end{equation}
	The above inequality is indeed true since $f(\bm{h}^{(t+1)},y^{(t)}) \leq f(\bm{h}^{(t)},y^{(t)})$ according to the convexity of our OT formulation, and $f(\bm{h}^{(t+1)},y^{(t+1)}) \leq f(\bm{h}^{(t+1)},y^{(t)})$ for the updating process itself minimizes the mean squared error.
\end{proof}

\begin{corollary}
\label{ps:converge}
Alg.~\ref{alg:measurePreserving} converges in a finite number of iterations.
\end{corollary}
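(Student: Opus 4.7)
The plan is to adapt the standard finite-termination argument for Lloyd's k-means algorithm, exploiting the fact that in the setting of Alg.~\ref{alg:measurePreserving} the empirical measure $X$ has finite support, so only finitely many configurations can arise along the iterates.

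First I would record that $X(x,\mu)$ is supported on a finite set $\{x_1,\dots,x_N\}$. Any partition produced by the variational-OT step is a power diagram whose cells satisfy the measure constraint $\sum_{x_i\in V_j}\mu_i=\nu_j$, so each admissible partition is specified by an assignment of the $N$ atoms to $k$ labels; there are at most $k^N$ such assignments, hence only finitely many partitions can ever be visited by Alg.~\ref{alg:measurePreserving}. Moreover, once $V^{(t+1)}$ is fixed, the centroid rule $y_j\leftarrow\sum_{x_i\in V_j}\mu_i x_i/\sum_{x_i\in V_j}\mu_i$ is the unique minimizer of that cell's contribution to $f$, so $y^{(t+1)}$ and therefore the value $f(\bm{h}^{(t+1)},y^{(t+1)})$ is a deterministic function of $V^{(t+1)}$ alone. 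Consequently the objective takes values in a finite set $\{f(V):V\text{ admissible}\}$.

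Combining these observations with Proposition~\ref{ps:minimize} closes the argument: at every iteration, either $f$ strictly decreases, or we are in the equality case of the monotonicity inequality in the proof of Proposition~\ref{ps:minimize}, which forces both sub-steps to be stationary and the algorithm has already converged. Strict decrease can happen at most finitely many times because $f$ ranges over a finite set of real values, so termination occurs in at most $k^N$ iterations.

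The main obstacle, modest but worth attention, will be ruling out a non-trivial cycle through distinct partitions of equal $f$-value. To handle this I would fix a tie-breaking convention on the measure-zero boundaries of power cells, and combine it with the equality analysis of Proposition~\ref{ps:minimize} to show that $f(\bm{h}^{(t+1)},y^{(t+1)})=f(\bm{h}^{(t)},y^{(t)})$ forces $V^{(t+1)}=V^{(t)}$ and hence halting rather than cycling.
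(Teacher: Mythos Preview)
Your proposal is correct and follows essentially the same route as the paper: finiteness of the assignment set (at most $k^N$ partitions of the $N$ atoms), determinism of the centroid update given a partition, and the monotone-decrease Proposition~\ref{ps:minimize} combine to force a cycle of length one. The paper's own proof is the classical Lloyd argument stated tersely; your version is in fact more careful, since you explicitly isolate the one delicate point the paper glosses over---ruling out a cycle through \emph{distinct} partitions with identical $f$-value---and propose the standard remedy of a deterministic tie-breaking rule on cell boundaries.
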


\begin{proof}
We borrow the proof for k-means. Given $N$ empirical samples and a fixed number $k$, there are $k^N$ ways of clustering. At each iteration, Alg.~\ref{alg:measurePreserving} produces a new clustering rule only based on the previous one. The new rule induces a lower cost if it is different than the previous one, or the same cost if it is the same as the previous one. Since the domain is a finite set, the iteration must eventually enter a cycle whose length cannot be greater than 1 because otherwise it violates the fact of the monotonically declining cost. Therefore, the cycle has the length of 1 in which case the Alg.~\ref{alg:measurePreserving} converges in a finite number of iterations. 
\end{proof}

\begin{corollary}
\label{ps:unique}
	Alg.~\ref{alg:measurePreserving} produces a unique (local) solution to Eq. (\ref{eq:impm}).
\end{corollary}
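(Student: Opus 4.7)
The plan is to exploit the uniqueness of each of the two alternating sub-steps comprising Alg.~\ref{alg:measurePreserving}. First I would observe that for any fixed set of centroids $y$, the variational-OT sub-routine returns a unique parameter $\bm{h}(y)$, up to an overall additive constant $(c,\dots,c)^T$ that does not affect the induced power diagram. This follows from Theorem~\ref{theory:alexandrov} together with the strict convexity of $E(\bm{h})$ in Eq.~(\ref{eq:energy}), whose Hessian~(\ref{eq:hessian}) was already shown to be positive semi-definite with only $(1,\dots,1)^T$ in its null space. Consequently the partition $V(\bm{h}(y))$ is uniquely determined by $y$.

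Next I would note that for any fixed partition $V$, the centroid update $y_j \leftarrow \sum_{x_i \in V_j}\mu_i x_i\big/\sum_{x_i \in V_j}\mu_i$ is the unique minimizer of the cellwise quadratic $\sum_{x_i \in V_j}\mu_i\|x_i-y_j\|^2$, being the sole stationary point of a strictly convex quadratic. Combining these two observations, at any fixed point $(\bm{h}^*,y^*)$ of Alg.~\ref{alg:measurePreserving} the two coordinates mutually and uniquely determine each other, so together with Proposition~\ref{ps:minimize} (monotone decrease of $f$) and Corollary~\ref{ps:converge} (finite termination), the limit point is unambiguously pinned down once it is reached.

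The main obstacle---and the reason for the word \emph{local} in the statement---is that the joint objective $f(\bm{h},y)$ in Eq.~(\ref{eq:impm}) is not jointly convex in $y$, so different initializations $y^{(0)}$ can funnel the alternation into different fixed points, exactly as with classical k-means. The argument above is therefore not a global uniqueness claim; it only certifies that after the iteration has stabilized, neither of the two updates can perturb the answer, so the solution reached is locally unique in the sense that it is the unique mutual fixed point of the two update rules within the basin of attraction of the chosen initial $y^{(0)}$. I would close the proof with a short remark to this effect so that the reader does not confuse this local uniqueness with the (false) claim of global uniqueness.
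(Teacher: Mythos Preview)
Your proposal is correct and follows essentially the same approach as the paper: both argue that each of the two alternating sub-steps (variational OT for $\bm{h}$, weighted averaging for $y$) is deterministic, so that from a fixed initialization the algorithm can only land at one outcome. The paper's own proof is terser---it simply asserts that the initial centroids are determined and that each update ``yields a unique outcome''---whereas you supply the supporting justification (Theorem~\ref{theory:alexandrov}, strict convexity of $E$, unique minimizer of the cellwise quadratic) and an explicit reading of what ``local'' means; but the underlying argument is the same.
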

\begin{proof}
The initial condition, $y$ the centroid positions, is determined. Each step of Alg.~\ref{alg:measurePreserving} yields a unique outcome, whether updating $\bm{h}$ by variational OT or updating $y$ by weighted averaging. Thus, Alg.~\ref{alg:measurePreserving} produces a unique outcome.
\end{proof}

Now, we introduce the concept of variational Wasserstein clustering. For a subset $M \subset \mathbb{R}^n$, let $\mathcal{P}(M)$ be the space of all Borel probability measures. Suppose $X(x,\mu) \in \mathcal{P}(M)$ is an existing one and we are to aggregate it into $k$ clusters represented by another measure $Y(y,\nu) \in \mathcal{P}(M)$ and assignment $y_j = \pi(x)$, $j=1,...,k$. Thus, we have $\pi \in \mathcal{P}(M \times M)$. Given $\nu$ fixed, our goal is to find such a combination of $Y$ and $\pi$ that minimize the object function:

\begin{figure}[!t]
  \centering
  \begin{minipage}[b]{0.6\linewidth}
  \centering
    \includegraphics[width=\linewidth]{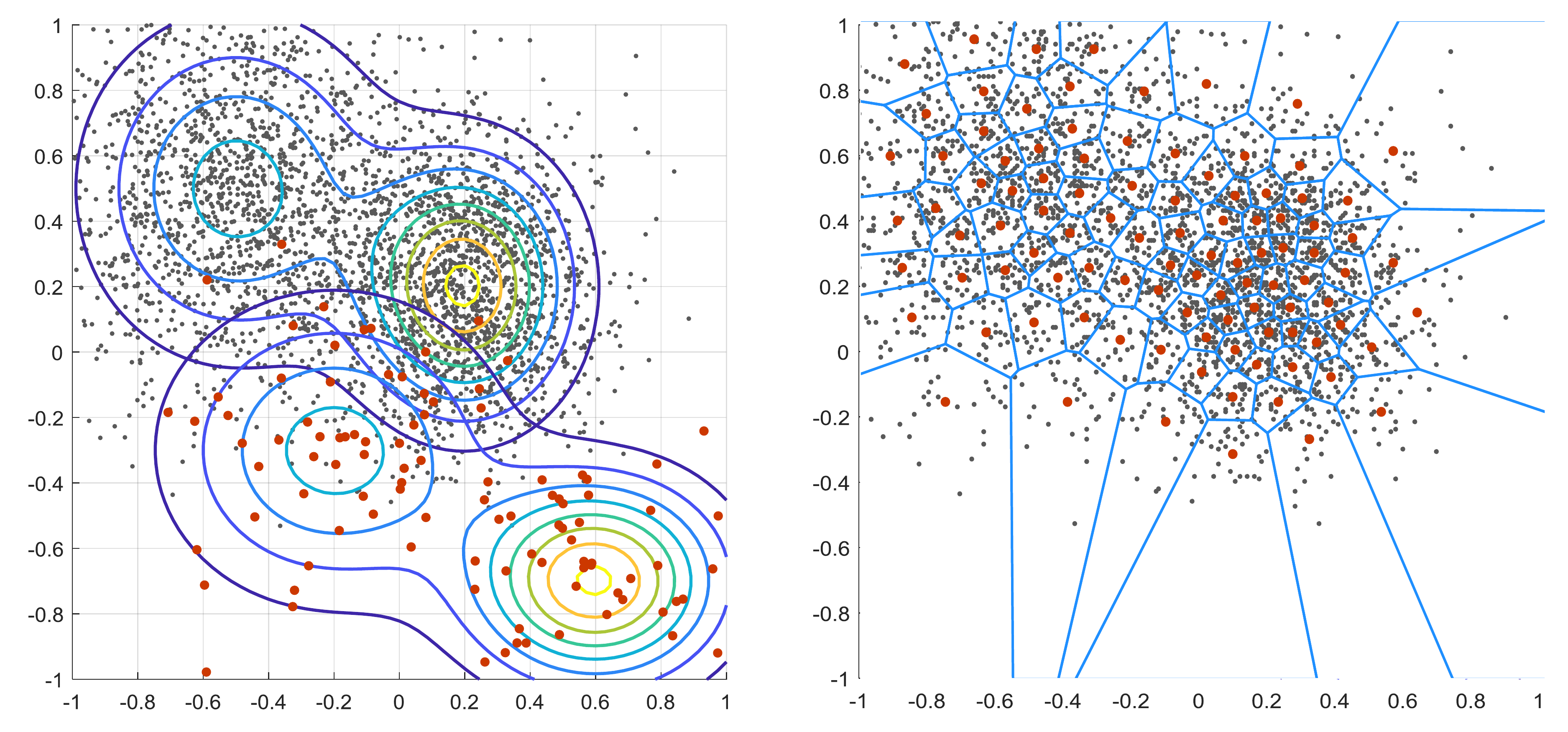}
    \caption{{\small Given the source domain (red dots) and target domain (grey dots), the distribution of the source samples are driven into the target domain and form a power Voronoi diagram.}}
    \label{fig:illustrate}
  \end{minipage}
  \hfill\vrule{}\hfill
  \begin{minipage}[b]{0.32\linewidth}
  \centering
    \includegraphics[width=\linewidth]{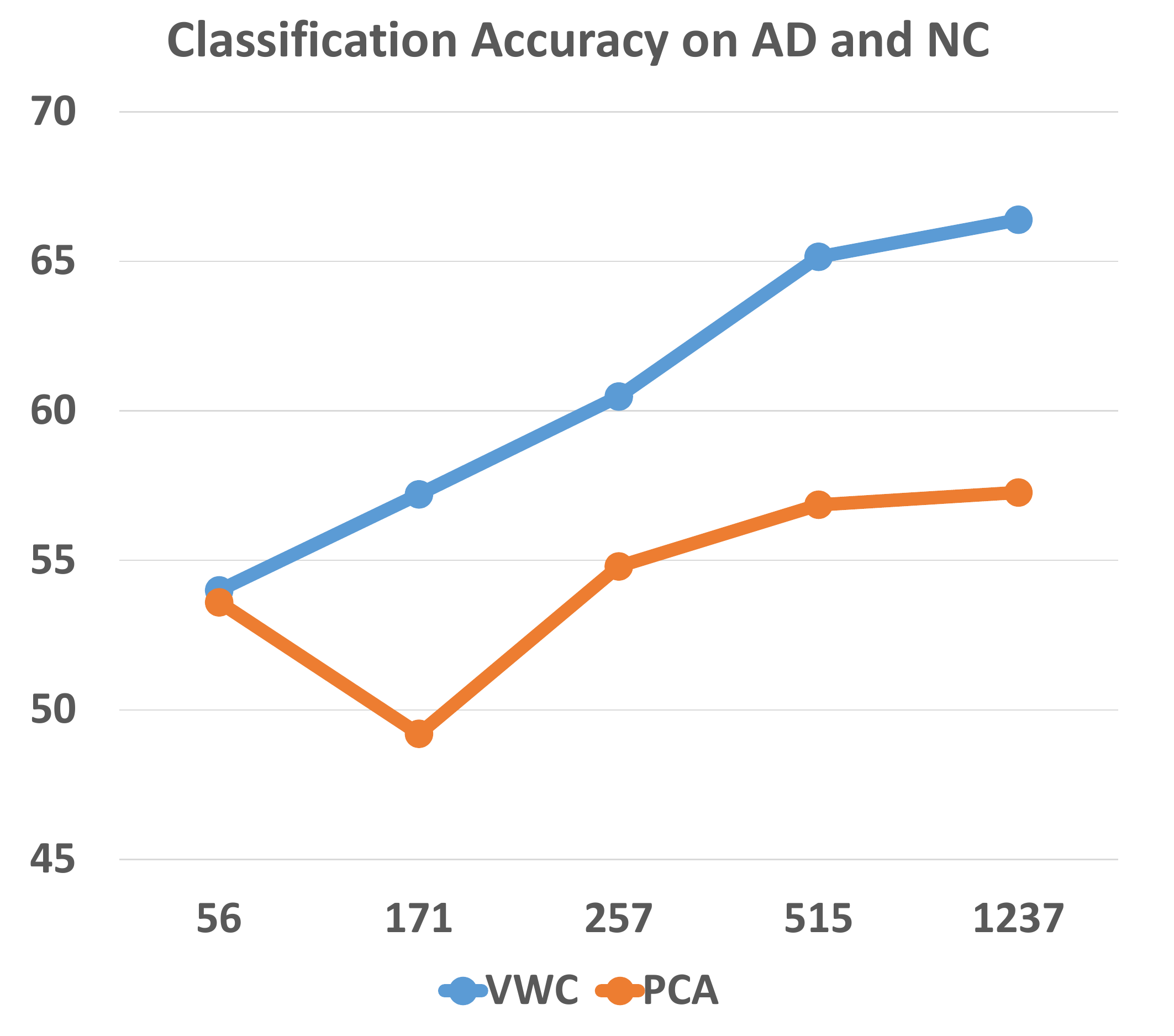}
    \caption{{\small Classification accuracies of VWC and PCA on AD and NC w.r.t. number of centroids. \ignore{\\\hspace{\textwidth}}}}
    \label{fig:brainPCA}
  \end{minipage}
\end{figure}

\begin{equation}
\label{eq:vwc}
	Y_{y,\nu} = \underset{\substack{Y \in P(M)\\
	\pi \in P(M \times M)}
	}{argmin} \sum_{j=1}^{k}\sum_{y_j=\pi(x_i)}\mu_i\|x_i-y_j\|^2,\ s.t.\ \nu_j = \sum_{y_j=\pi(x_i)}\mu_i.
\end{equation}

Eq. (\ref{eq:vwc}) is not convex w.r.t. $y$ as discussed in~\cite{cuturi2014fast}. We thus solve it by iteratively updating $\pi$ and $y$. When updating $\pi$, since $y$ is fixed, Eq. (\ref{eq:vwc}) becomes an optimal transportation problem. Therefore, solving Eq. (\ref{eq:vwc}) is equivalent to approaching the infimum of the $2$-Wasserstein distance between $X$ and $Y$: 
\begin{equation}
\label{eq:vwc2}
	\underset{\substack{Y \in P(M)\\
	\pi \in P(M \times M)}
	}{\inf} \sum_{j=1}^{k}\sum_{y_j=\pi(x_i)}\mu_i\|x_i-y_j\|^2\ = 	\underset{Y \in P(M)}{\inf} W_{2}^{2}(X,Y).
\end{equation}

\noindent Assuming the domain is convex, we can apply iterative measure-preserving mapping (Alg.~\ref{alg:measurePreserving}) to obtain $y$ and $h$ which induces $\pi$. In case that $X$ and $Y$ are not in the same domain i.e. $Y(y,\nu) \in P(N)$, $N \subset \mathbb{R}^n$, $N \neq M$, or the domain is not necessarily convex, we leverage \textit{harmonic mapping}~\cite{gu2008computational,wang2004volumetric} to map them to a convex canonical space. We wrap up our complete algorithm in Alg.~\ref{alg:wassersteinClustering}. Fig.~\ref{fig:illustrate} illustrates a clustering result. Given a source Gaussian mixture (red dots) and a target Gaussian mixture (grey dots), we cluster the target domain with the source samples. Every sample has the same mass in each domain for simplicity. Thus, we obtain an unweighted Voronoi diagram. In the next section, we will show examples that involve different mass. We implement our algorithm in C/C++ and adopt Voro++~\cite{rycroft2009voro++} to compute Voronoi diagrams. The code is available at \href{https://github.com/icemiliang/vwc}{https://github.com/icemiliang/vot}.

%%%%%%%%%%%%%%%%%%%%%%%%%%%%%%%%%%%%%%%%%%%%%%%%%%%%
%%%%%%%%%%%%%%%%%% Applications %%%%%%%%%%%%%%%%%%%%
%%%%%%%%%%%%%%%%%%%%%%%%%%%%%%%%%%%%%%%%%%%%%%%%%%%%

\section{Applications}
\label{sec:apply}
While the k-means clustering problem is ubiquitous in numerous tasks in computer vision and machine learning, we present the use of our method in approaching domain adaptation, remeshing, and representation learning.

\subsection{Domain Adaptation on Synthetic Data}
\label{sec:apply_domain}
Domain adaptation plays a fundamental role in knowledge transfer and has benefited many different fields such as scene understanding and image style transfer. Several works have coped with domain adaptation by transforming distributions in order to close their gap with respect to a measure. In recent years, Courty \MakeLowercase{\textit{et al.}}~\cite{courty2014domain} took the first steps in applying optimal transportation to domain adaptation. Here we revisit this idea and provide our own solution to \textit{unsupervised many-to-one domain adaptation} based on variational Wasserstein clustering.

\begin{figure}[!t]
  \centering
    \includegraphics[width=\linewidth]{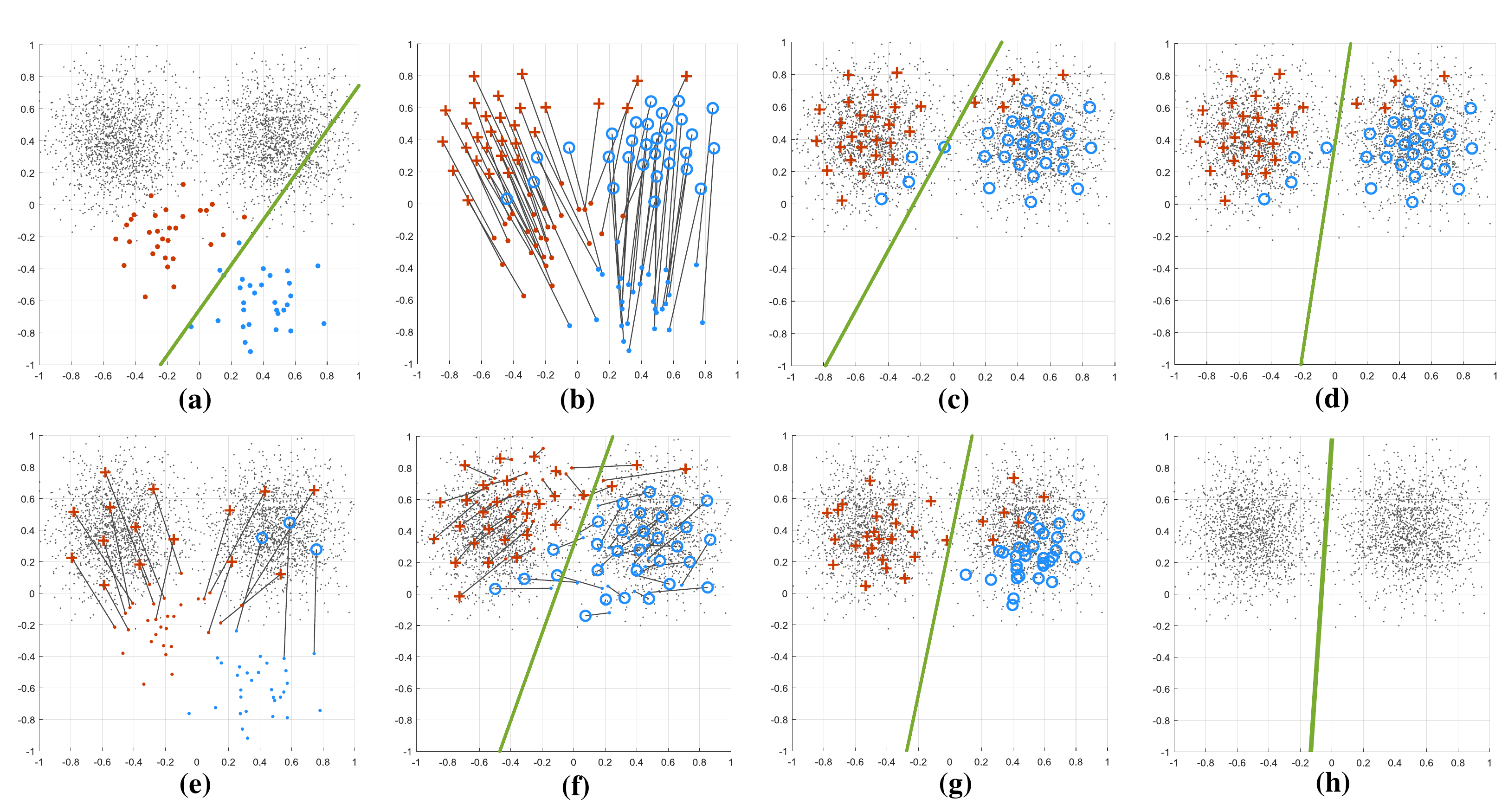}
    \caption{{\small SVM RBF boundaries for domain adaptation. (a) Target domain in gray dots and source domain of two classes in red and blue dots; (b) Mapping of centroids by using VWC; (c,d) Boundaries from VWC with linear and RBF kernels; (e) k-means++~\cite{arthur2007k} fails to produce a model; (f) After recentering source and target domains, k-means++ yields acceptable boundary; (g) D2~\cite{ye2017fast}; (h) JDOT~\cite{courty2017joint}, final centroids not available.}}
    \label{fig:da}
\end{figure}

Consider a two-class classification problem in the 2D Euclidean space. The source domain consists of two independent Gaussian distributions sampled by red and blue dots as shown in Fig.~\ref{fig:da} (a). Each class has 30 samples. The target domain has two other independent Gaussian distributions with different means and variances, each having 1500 samples. They are represented by denser gray dots to emulate the source domain after an unknown transformation. 

We adopt support vector machine (SVM) with linear and radial basis function (RBF) kernels for classification. The kernel scale for RBF is 5. One can notice that directly applying the RBF classifier learned from the source domain to the target domain provides a poor classification result (59.80\%). While Fig.~\ref{fig:da} (b) shows the final positions of the samples from the source domain by VWC, (c) and (d) show the decision boundaries from SVMs with a linear kernel and an RBF kernel, respectively. In (e) and (f) we show the results from the classic k-means++ method~\cite{arthur2007k}. In (e) k-means++ fails to cluster the unlabeled samples into the original source domain and produces an extremely biased model that has 50\% of accuracy. Only after we recenter the source and the target domains yields k-means++ better results as shown in (f). 

For more comparison, we test two other methods -- D2~\cite{ye2017fast} and JDOT~\cite{courty2017joint}. The final source positions from D2 are shown in (g). Because D2 solves the general barycenter problem and also updates the weights of the source samples, it converges as soon as it can find them some positions when the weights can also satisfy the minimum clustering loss. Thus, in (g), most of the source samples dive into the right, closer density, leaving those moving to the left with larger weights. We show the decision boundary obtained from JDOT~\cite{courty2017joint} in (h). JDOT does not update the centroids, so we only show its decision boundary. In this experiment, both our method for Monge's OT and the methods~\cite{courty2017joint,ye2017fast} for Kantorovich's OT can effectively transfer knowledge between different domains, while the traditional method~\cite{arthur2007k} can only work after a prior knowledge between the two domains, e.g. a linear offset. Detailed performance is reported in Tab.~\ref{tab:domain}.  

\bgroup
\begin{table}[!b]
\centering
\caption{Classification Accuracy for Domain Adaptation on Synthetic Data} \label{tab:domain}
\begin{tabular}{@{\ }c@{\ }|@{\ }c@{\ }|@{\ }c@{\ }c@{\ }|c@{\ }c@{\ }|c@{\ }c@{\ }|c@{\ }c@{\ }}
\hline
{}& {k-means++~\cite{arthur2007k}$\bm{^*}$} & \multicolumn{2}{c|}{k-means++$\bm{^r}$} & \multicolumn{2}{|c|}{D2~\cite{ye2017fast}} & \multicolumn{2}{|c|}{JDOT~\cite{courty2017joint}} & \multicolumn{2}{|c}{VWC}\\
\hline
Kernel      & Linear/RBF & Linear & RBF    & Linear & RBF   & Linear & RBF   & Linear & RBF \\
\hline
Acc.    & 50.00      & 97.88  & 99.12  & 95.85  & 99.25 & 99.03  & 99.23 & 98.56  & 99.31 \\
\hline
Sen. & 100.00     & 98.13 & 98.93   & 99.80  & 99.07 & 98.13  & 99.60 & 98.00  & 99.07\\
\hline
Spe. & 0.00       & 97.53 & 99.27   & 91.73  & 99.40 & 99.93  & 98.87 & 99.07  & 99.53\\
\hline
\end{tabular}
\begin{flushleft} 
$\bm{^*}$: extremely biased model labeling all samples with same class; $\bm{^r}$: after recenterd.
\end{flushleft}
\end{table}
\egroup

\subsection{Deforming Triangle Meshes}
\label{sec:apply_remesh}

Triangle meshes is a dominant approximation of surfaces. Refining triangle meshes to best represent surfaces have been studied for decades, including~\cite{shewchuk2002delaunay,fabri2009cgal,goes2014weighted}. Given limited storage, we prefer to use denser and smaller triangles to represent the areas with relatively complicated geometry and sparser and larger triangles for flat regions. We follow this direction and propose to use our method to solve this problem. The idea is to drive the vertices toward high-curvature regions. 

\begin{figure}[!t]
  \centering
    \includegraphics[width=\linewidth]{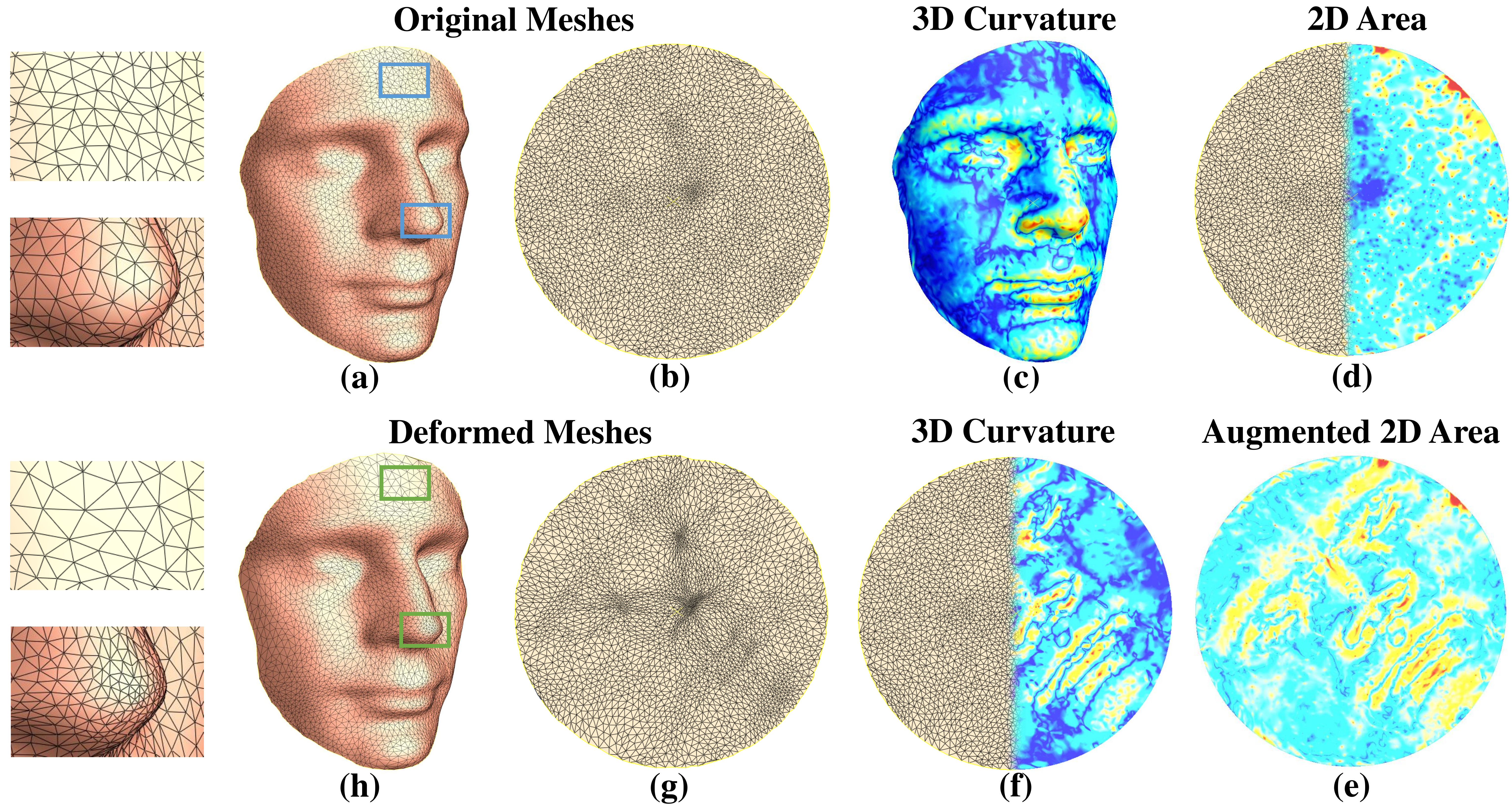}
    \caption{{\small Redistribute triangulation based on curvature. Original mesh (a) is mapped to a unit disk (b). Mean curvature on the 3D mesh (c) is copied to the disk (f). Design an ``augmented'' measure $\mu$ (e) on the disk by incorporating curvature $C$ into 2D vertex area $A$ (d), e.g. $\mu = 0.4 A + 0.6 C$. A vertex $y$ with a large curvature $C$, in order to maintain its original measure $A$, will shrink its own cluster. As a result vertices collapse in high-curvature regions (g). Mesh will be pulled back to 3D (h) by inverse mapping.}}
    \label{fig:rm}
\end{figure}

We consider a surface $\mathbb{S}^2$ approximated by a triangle mesh $T_{\mathbb{S}^2}(v)$. To drive the vertices to high-curvature positions, our general idea is to reduce the areas of the triangles in there and increase them in those locations of low curvature, producing a new triangulation $T_{\mathbb{S}^2}'(v)$ on the surface. To avoid computing the geodesic on the surface, we first map the surface to a \textit{unit disk} $\phi:\mathbb{S}^2\rightarrow \mathbb{D}^2 \subset \mathbb{R}^2$ and equip it with the Euclidean metric. We drop the superscripts $2$ for simplicity. To clarify notations, we use $T_{\mathbb{S}}(v)$ to represent the original triangulation on surface $\mathbb{S}$; $T_{\mathbb{D}}(v)$ to represent its counterpart on $\mathbb{D}$ after harmonic mapping; $T_{\mathbb{D}}'(v)$ for the target triangulation on $\mathbb{D}$ and $T_{\mathbb{S}}'(v)$ on $\mathbb{S}$. Fig.~\ref{fig:rm} (a) and (b) illustrate the triangulation before and after the harmonic mapping. Our goal is to rearrange the triangulation on $\mathbb{D}$ and then the following composition gives the desired triangulation on the surface:

\begin{equation}
	T_{\mathbb{S}}(v) \xrightarrow{\ \phi\ } T_{\mathbb{D}}(v) \xrightarrow{\ \pi\ } T_{\mathbb{D}}'(v) \xrightarrow{\ \phi^{-1}\ } T_{\mathbb{S}}'(v).\nonumber
\end{equation}

\noindent $\pi$ is where we apply our method. 

Suppose we have an original triangulation $T_{sub,\mathbb{S}}(v)$ and an initial downsampled version $T_{\mathbb{S}}(v)$ and we map them to $T_{sub,\mathbb{D}}(v)$ and $T_{\mathbb{D}}(v)$, respectively. The vertex area $A_{\mathbb{D}}: v \rightarrow a$ on $\mathbb{D}$ is the source (Dirac) measure. We compute the (square root of absolute) mean curvature $C_{sub,\mathbb{S}}: v_{sub} \rightarrow c_{sub}$ on $\mathbb{S}$ and the area $A_{sub,\mathbb{D}}: v_{sub} \rightarrow a_{sub}$ on $\mathbb{D}$. After normalizing $a$ and $c$, a weighted summation gives us the target measure, $\mu_{sub,\mathbb{D}} = (1-\lambda)\ a_{sub,\mathbb{D}} + \lambda\ c_{sub,\mathbb{D}}$. We start from the source measure $(v,a)$ and cluster the target measure $(v_{sub},\mu_{sub})$. The intuition is the following. If $\lambda = 0$, $\mu_{i,sub} = a_{i,sub}$ everywhere, then a simple unweighted Voronoi diagram which is the dual of $T_{\mathbb{D}}(v)$ would satisfy Eq. (\ref{eq:vwc2}). As $\lambda$ increases, the clusters $V_j(v_j,a_j)$ in the high-curvature ($c_{sub,\mathbb{D}}$) locations will require smaller areas ($a_{sub,\mathbb{D}}$) to satisfy $a_j = \sum_{v_{i,sub} \in V_j} \mu_{i,sub}$.

We apply our method on a human face for validation and show the result in Fig.~\ref{fig:rm}. On the left half, we show the comparison before and after the remeshing. The tip of the nose has more triangles due to the high curvature while the forehead becomes sparser because it is relatively flatter. The right half of Fig.~\ref{fig:rm} shows different measures that we compute for moving the vertices. (c) shows the mean curvature on the 3D surface. We map the triangulation with the curvature onto the planar disk space (f). (d) illustrates the vertex area of the planar triangulation and (e) is the weighted combination of 3D curvature and 2D area. Finally, we regard area (d) as the source domain and the ``augmented'' area (e) as the target domain and apply our method to obtain the new arrangement (g) of the vertices on the disk space. After that, we pull it back to the 3D surface (h). As a result, vertices are attracted into high-curvature regions. Note the boundaries of the deformed meshes (g,h) have changed after the clustering. We could restrict the boundary vertices to move \textit{on} the unit circle if necessary. Rebuilding a Delaunay triangulation from the new vertices is also an optional step after.

\subsection{Learning Representations of Brain Images}
\label{sec:apply_represent}

Millions of voxels contained in a 3D brain image bring efficiency issues for computer-aided diagnoses. A good learning technique can extract a better representation of the original data in the sense that it reduces the dimensionality and/or enhances important information for further processes. In this section, we address learning representations of brain images from a perspective of Wasserstein clustering and verify our algorithm on magnetic resonance imaging (MRI).

\begin{figure}[!t]
  \centering
    \includegraphics[width=\linewidth]{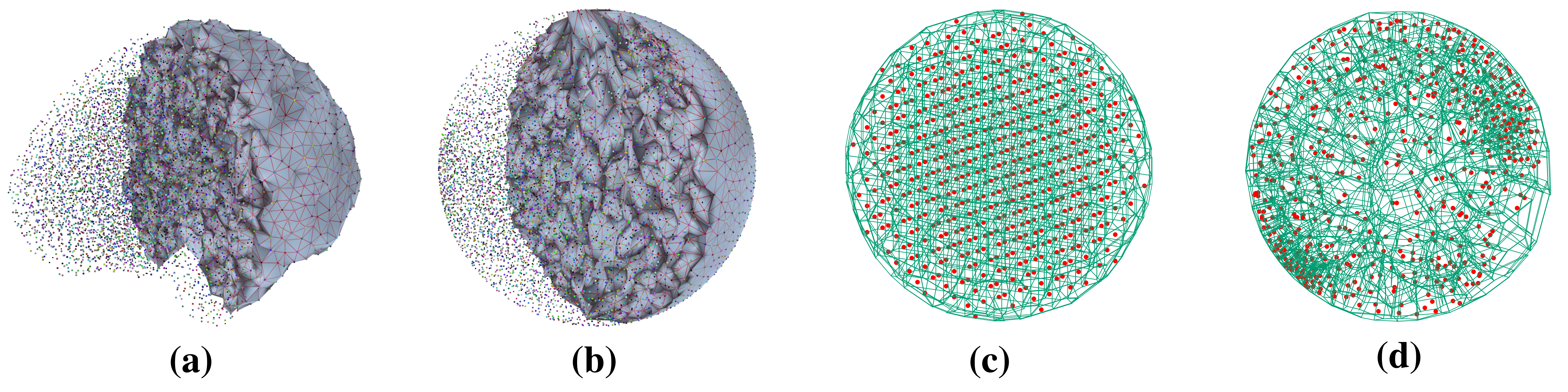}
    \caption{{\small Brain images are projected to tetrahedral meshes (a) that are generated from brain surfaces. Meshes are deformed into unit balls (b) via harmonic mapping. A sparse uniform distribution inside the ball (c) is initialized and shown with the initial Voronoi diagram. We start from (c) as the initial centroids and cluster (b) as the empirical measure by using our proposed method. (d) shows the resulting centroids and diagram.}}
\label{fig:rl}
\end{figure}

In the high level, given a brain image $X(x,\mu)$ where $x$ represents the voxels and $\mu$ their intensities, we aim to cluster it with a known sparse distribution $Y(y,\nu)$. \ignore{Brain images, particularly MRI, can be treated as distributions. MRI measures the time taken for hydrogen nuclei to return to their original state after excited by external radiofrequency (RF) pulses~\cite{mcrobbie2017mri} and thus can be considered as a map of the nuclear energy of the entire brain volume.} We consider that each brain image \ignore{representing the brain volume}is a submanifold in the 3D Euclidean space, $M \subset \mathbb{R}^3$. To prepare the data, for each image, we first remove the skull and extract the surface of the brain volume by using Freesurfer~\cite{fischl2012freesurfer}, and use Tetgen~\cite{si2006quality} to create a tetrahedral mesh from the surface. Then, we project the image onto the tetrahedral mesh by finding the nearest neighbors and perform harmonic mapping to deform it into a \textit{unit ball} as shown in Fig.~\ref{fig:rl} (a) and (b).
 
Now, following Alg.~\ref{alg:wassersteinClustering}, we set a discrete uniform distribution sparsely supported in the unit ball, $Y(y,\nu) \sim U_{\mathbb{D}^3}(-1,1)$ as shown in Fig.~\ref{fig:rl} (c). Starting from this, we learn such a new $y$ that the representation mapping $\pi:x \rightarrow y$ has the minimum cost (\ref{eq:vwc2}). Thus, we can think of this process as a non-parametric mapping from the input to a latent space $\mathcal{P}(y)$ of dimension $k \times n \ll |x|$ where $k$ is the number of clusters and $n$ specifies the dimension of the original embedding space, e.g. $3$ for brain images. Fig.~\ref{fig:rl} (d) shows the resulting centroids and the corresponding power diagram. We compare our method with principle component analysis (PCA) to show its capacity in dimensionality reduction. We apply both methods on 100 MRI images with 50 of them labeled Alzheimer's disease (AD) and 50 labeled normal control (NC). After obtaining the low-dimensional features, we directly apply a linear SVM classifier on them for 5-fold cross-validation. The plots in Fig.~\ref{fig:brainPCA} show the superiority of our method. It is well known that people with AD suffer brain atrophy resulting in a group-wise shift in the images~\cite{fox1997brain}. The result shows the potential of VWC in embedding the brain image in low-dimensional spaces. We could further incorporate prior knowledge such as regions-of-interest into VWC by hand-engineering initial centroids.

%%%%%%%%%%%%%%%%%%%%%%%%%%%%%%%%%%%%%%%%%%%%%%%%%%%%
%%%%%%%%%%%%%%%%%% Discussion %%%%%%%%%%%%%%%%%%%%
%%%%%%%%%%%%%%%%%%%%%%%%%%%%%%%%%%%%%%%%%%%%%%%%%%%%

\section{Discussion}
\label{sec:discuss}
Optimal transportation has gained increasing popularity in recent years thanks to its robustness and scalability in many areas. In this paper, we have discussed its connection to k-means clustering. Built upon variational optimal transportation, we have proposed a clustering technique by solving iterative measure-preserving mapping and demonstrated its applications to domain adaptation, remeshing, and learning representations.

One limitation of our method at this point is computing a high-dimensional Voronoi diagram. It requires complicated geometry processing which causes efficiency and memory issues. A workaround of this problem is to use gradient descent for variational optimal transportation because the only thing we need from the diagram is the intersections of adjacent convex hulls for computing the Hessian. The assignment of each empirical observation obtained from the diagram can be alternatively determined by nearest search algorithms. This is beyond the scope of this paper but it could lead to more real-world applications.

The use of our method for remeshing could be extended to the general feature redistribution problem on a compact $2$--manifold. Future work could also include adding regularization to the centroid updating process to expand its applicability to specific tasks in computer vision and machine learning. The extension of our formulation of Wasserstein means to barycenters is worth further study.\\

\noindent \textbf{Acknowledgemants} The research is partially supported by National Institutes of Health (R21AG043760, RF1AG051710, and R01EB025032), and National Science Foundation (DMS-1413417 and IIS-1421165).

\clearpage

\bibliographystyle{splncs}
\bibliography{egbib}
\end{document}